\algnewcommand\algorithmicinput{\textbf{INPUT:}}
\algnewcommand\INPUT{\item[\algorithmicinput]}
\newtheorem{theorem}{Theorem}
\newtheorem{lemma}{Lemma}
\newtheorem{assumption}{Assumption}
\theoremstyle{definition}
\newtheorem{definition} {Definition}
\newtheorem{remarks}{Remark}
\newcounter{relctr} 
\everydisplay\expandafter{\the\everydisplay\setcounter{relctr}{0}} 
\def\argmax{\mathop{\rm arg\,max}}
\title{Provably Efficient Multi-Agent Reinforcement Learning \\ with Fully Decentralized Communication}
\author{Justin Lidard, Udari Madhushani, and Naomi Ehrich Leonard
\thanks{This research has been supported in part by an NDSEG Fellowship and ONR grants N00014-18-1-2873 and N00014-19-1-2556.}
\thanks{Authors are with Department of Mechanical and Aerospace Engineering, Princeton University, Princeton, NJ 08544, USA.
        {\tt\small \{jlidard,udarim, naomi\}@princeton.edu}}%
}
\date{March 2021}
\begin{document}

\maketitle
\thispagestyle{plain}
\pagestyle{plain}

\begin{abstract}
    A challenge in reinforcement learning (RL) is minimizing the cost of sampling associated with exploration. Distributed exploration reduces sampling complexity in multi-agent RL (MARL). We investigate the benefits to performance in MARL when exploration is fully decentralized. Specifically, we consider a class of online, episodic,  tabular $Q$-learning problems under time-varying reward and transition dynamics, in which agents can communicate in a decentralized manner. We show that group performance, as measured by the bound on regret, can be significantly improved through communication when each agent uses a decentralized message-passing protocol, even when limited to sending information up to its $\gamma$-hop neighbors. We prove regret and sample complexity bounds that depend on the number of agents, communication network structure and $\gamma$. We show that incorporating more agents and more information sharing into the group learning scheme speeds up convergence to the optimal policy. Numerical simulations illustrate our results and validate
our theoretical claims.  
\end{abstract}




\section{Introduction and Related Work}

Multi-agent reinforcement learning is an active and growing area of research with focus on cooperation, competition and mixed-motive scenarios that arise in multi-agent systems. Similar to the single-agent setting, in MARL agents try to maximize their cumulative reward through estimation of a \textit{value function}. We measure the convergence of this estimate in two ways: {\em sample complexity}, which bounds the number of reward samples needed to find an approximately optimal value function, and {\em regret}, which bounds the cumulative value function error over time. We denote an algorithm as \textit{sample efficient} if it has near-optimal sample complexity.

In training MARL algorithms, allowing agents to share value function parameters or reward samples can lead to faster convergence. But, it still remains an open question whether this information sharing should be centralized or decentralized. There has been empirical evidence \cite{rashid_weighted_2020, foerster_counterfactual_2018, wang_roma_2020-1} that training using a central controller is effective even when the state and action spaces are large. However, as the number of agents increases, the number of joint states and actions becomes exponentially large, demanding more storage space to tabulate scenarios, increasing the \textit{space complexity}. Due to the distributed nature and  combinatorial complexity of MARL, there has been increased interest in developing theory for \textit{decentralized} model-free algorithms that allow agents to train and to perform optimally with space complexity remaining polynomial in the number of agents \cite{dubey_provably_2021, zhang_finite-sample_2021-1, zhang_marl_2021}, particularly when the problem of optimizing the joint action factors as optimizing individual agent actions.  

A key underlying trade-off in RL is known as \textit{exploration} versus \textit{exploitation}; an efficient exploration strategy is always necessary to discover new scenarios while capitalizing on experience from prior scenarios. Tabular upper-confidence bound (UCB) algorithms, such as those utilized in episodic model-free $Q$-learning \cite{jin_is_2018, zhang_almost_2020-1}, choose a time-varying exploration bonus leading to $\Tilde{\mathcal O}(\sqrt{H^4SAT})$ (Hoeffding-style) and $\Tilde{\mathcal O}(\sqrt{H^3SAT})$ (Bernstein-style) regret\footnote{$\Tilde{\mathcal O} $ ignores $\log$ terms.}. The authors of \cite{zhang_almost_2020-1} improve this regret bound to $\mathcal O(\sqrt{H^2SAT})$, which is proved in \cite{jin_is_2018, jin_bellman_2021} to be minimax-optimal in the single-agent case. Here, $S$ is the number of states, $A$ is the number of actions, $H$ is the number of steps per episode, and $T$ is the total number of reward samples. The exploration bonus is chosen to match the value function error up to a constant factor. In the multi-agent setting, a naïve application of the single-agent result \cite{jin_is_2018} running in parallel gives $\tilde{\mathcal O}(M\sqrt{H^4SAT})$ regret, where $M$ is the number of agents. In this paper, we show how $Q$-learning can give $\Tilde{\mathcal O}(\sqrt{MH^4SAT})$ regret by providing an optimal exploration strategy that uses communication among the agents to accelerate online learning.  


Our results apply to tasks in which each agent in a network learns an optimal value function under the episodic Markov decision process (MDP) paradigm. We focus on the scenario where agents interact with their respective MDP in parallel, and there is no coupling between joint actions and the joint reward. An agent is allowed to perform one action and one round of message passing per time step up to its $\gamma$-hop neighbors; this approach is analogous to consensus protocols in earlier works (e.g. \cite{kar_qd-learning_2013, zhang_fully_2018-1, zhang_networked_2018, zhang_marl_2021}) but permits sharing of state, action, and reward information directly. To this end, we ask the question: if agents operate in similar environments, can they jointly explore the state-action space and, through communication, discover the optimal policy faster than individual agents operating in parallel? We propose an algorithm that permits \textit{full decentralization} of the learning process in which agents explore proportional to the amount of information they receive from other agents. 

Kar et al.  were among the first to show almost-sure convergence of consensus-based $Q$-learning over a sparse communication network \cite{kar_qd-learning_2013}. Several works introduce a decentralized actor-critic approach, also using consensus to provide convergence guarantees in the cooperative setting, although no finite-time sample complexity results are provided \cite{zhang_fully_2018-1, zhang_networked_2018}. J. Zhang et al. provide $\mathcal O(\varepsilon^{-2.5})$ (one round of consensus) and $\mathcal O(\varepsilon^{-2})$ (multiple rounds) sample complexity for a cooperative batch actor-critic algorithm \cite{zhang_marl_2021}. Our method differs by permitting an arbitrary constant number of episodes, needing only one round of communication per iteration for optimal regret, and being value-based only. K. Zhang et al. provide a finite-sample PAC bound in the cooperative and competitive batch (i.e. not online) settings \cite{zhang_finite-sample_2021-1}. Dubey and Pentland provide multi-agent regret bounds for cooperative RL for parallel MDPs (see Section \ref{preliminaries})  \cite{dubey_provably_2021}. A linear function approximation and a central server are used to perform least-squares value iteration with shared transition samples. Our work complements these results by providing the first decentralized multi-agent regret bound for tabular $Q$-learning, with message passing eliminating the need for a central server.  In the sequel, we show further that our bound matches centralized benchmarks.

The key contributions of this paper are as follows: (1) we provide a novel multi-agent UCB $Q$-learning algorithm in which agents use message passing to share information, and (2) we  show  that  group  performance, as  measured  by  the  bound  on  regret, improves upon the Hoeffding-style exploration strategy in the single agent setting by a factor of $M^{-1/2}$. Moreover, our regret takes into consideration the network structure and communication threshold $\gamma$, suggesting that even mild communication leads to improvement in the regret. As far as we know, this paper provides the first multi-agent regret bound for decentralized tabular $Q$-learning for a general network.

\section{Mathematical Preliminaries} \label{preliminaries}

We consider \textit{parallel MDPs} \cite{dubey_provably_2021, bernstein_complexity_2002}, which are defined as a collection $\{\textbf{MDP}(\mathcal S_i, \mathcal A_i, \mathbb P_i, r_i, H)\}_{i=1}^M$, where each agent $i \in [M]$ plans over a finite time horizon $H$ and has access to identical state space $\mathcal S_i = \mathcal S_j$ and action space $\mathcal A_i = \mathcal A_j$, $\forall i,j \in [M]$.\footnote{$[N]=\{1,...,N\}$ for all natural numbers $N$.} We denote the size of the state and action spaces as $S:=|\mathcal S|$ and $A:=|\mathcal A|$ respectively. 

The local reward functions $r_i$ and transitions $\mathbb P_i$ can be inhomogenous (over time) but only depend on the local state and action information of each agent. Thus, each agent interacts \textit{in parallel} with their corresponding MDP, and there is no coupling between the state and actions chosen by an agent and the reward received by any other agent. Here, we let $r = r_1 = ... = r_M$ and $\mathbb P = \mathbb P_1 = ... = \mathbb P_M$, such that every agent interacts with the same MDP. Although restrictive, parallel MDPs provide a baseline for generalizations to more complex environments, such as heterogeneous MDPs \cite{zhou_nearly_2021}.

Dubey and Pentland \cite{dubey_provably_2021} provide a multi-agent regret bound under heterogeneous reward and transition structure using estimation of the feature covariance and bias. In contrast, our framework permits direct sharing of transition-reward tuples where the $Q$-function can be tabulated directly. We let the $M$ agents communicate over a network $G$ with edge set $E$. In the episodic setting, each agent $m\in[M]$ gets a local reward $r_{h} (x_{m,h}^k,a_{m,h}^k) $, the state-action pair it chooses at step $h$ of episode $k$. We let the value function $V_{m,h}^k$  denote agent $m$'s estimate of the expected lookahead cumulative reward $V_{m,h}^*$. Agent $m$ follows a deterministic policy $\pi_{m,k}$ and seeks to maximize the true payoff $V_{m,h}^{\pi_{m,k}}$. Agent $m$'s regret at episode $K$ is defined as the cumulative difference in value functions: \[\text{Regret}_m(K) := \sum_{k=1}^K (V_{m,h}^* - V_{m,h}^{\pi_{m,k}}) (x_{m,1}^k)\]  (see \cite{jin_is_2018}). 
In the Parallel MDP construction, we assume that joint reward and transition probability are decoupled. The objective is then to maximize the joint reward through optimization of individual rewards. 
\textit{Group regret} is
\begin{equation}
    \text{Regret}_G(K) = \sum_{m=1}^M \text{Regret}_m(K) \label{eqn:group_regret}
\end{equation}
Here, we show that the local optimizations are  provably accelerated through  communication. 

\section{MARL with Full Communication} \label{regret_analysis}

\subsection{Algorithm} \label{sec:Algorithm}

We consider a multi-agent extension to the online $Q$-learning algorithm provided by \cite{jin_is_2018}. Consider $M$ agents operating in a parallel MDP for $K$ episodes of length $H$. In the single-agent case, each agent makes one update corresponding to the state-action pair $(x_h^k, a_h^k)$ visited at each step $h$ of episode $k$, for a total of $T=HK$ samples. In the multi-agent case, 
at each time $\tau$, each agent $m\in[M]$ sees a state-action pair $(x_{m,h}^k, a_{m,h}^k)$, reward $r_h(x_{m,h}^k, a_{m,h}^k)$, and next state $x_{m,h+1}^k$ and exchanges messages with its neighbors. We consider a message-passing protocol in which each agent $m\in M$ sends  to its neighbors a message $m_h^k:=\langle h,k,m, x_{m,h}^k,a_{m,h}^k,x_{m,h+1}^k,r_h^k\rangle$ containing step, episode, agent id, current state, current action, next state and current reward. Each neighbor then forwards the message to its neighbors. All  messages  older than $\gamma$ are excluded. Here  $0 \leq \gamma \leq D(G)$, where $D(\cdot)$ is graph diameter.
The message-passing protocol allows each agent to send information up to $\gamma$-hop neighbors. 
For $\gamma=0$, we recover  $M$ copies of the single-agent case with no communication and group regret $\mathcal O(M\sqrt{T})$. We make Assumption \ref{Assumption:message_life} in order to prevent propagation of unused messages in short episodes.
\begin{assumption} \label{Assumption:message_life} \normalfont{(Episodic length bounds message life).}
Assume $0 \leq \gamma \leq \min(D(G), H)$.   
\end{assumption}

We make several definitions for the $Q$-learning update. 

\begin{definition} \normalfont{(Number of state-action observations)} 
\begin{equation*}
    \begin{split}
        N_{m,h}^k(x,a) = \sum_{\ell=1}^k  \sum_{j=1}^M \mathds{1}[x_{j,h}^\ell = x, a_{j,h}^\ell = a] \mathds{1}[(m,j) \in E]
    \end{split}
\end{equation*}
\end{definition}
\noindent Counting the state-action visitations is crucial for controlling the error due to exploration in UCB-style algorithms. For any set $S$ of agents, we take $N_{S,h}^k(x,a)$ to be the smallest number of samples available to any agent $m \in S$.

Consider the power graph $G_\gamma$ of $G$: nodes $m$ and $m'$ share an edge if and only if $d(m,m') \leq \gamma$. Let $G_\gamma(m)$ be the neighbors of $m$ in $G_\gamma$. Define  set $\mathcal V_{m,h}^k(x,a)=\mathcal V$, 
\begin{align*}
    \mathcal V= 
    \begin{cases}
   \Big \{\underset{i \in G_\gamma(m)}{\bigcup}  (r_{i,h}^{k-1}, x_{i,h+1}^{k-1}) \Big \} & \exists i: N_{i,h}^k(x,a) > 0 \\ 
   \emptyset & \text{o.w.} 
\end{cases} 
\end{align*}
to be the set of all \textit{new} reward and next-state tuples available to each agent $m$ for state-action pair $(x,a)$ for any agent across the network at step $h$ and episode $k$. Define $\mathcal U_{m,h}^k$ to be the set of reward and next-state tuples taken by and observed by agent $m$. Then, $\mathcal U_{m,h}^k(x,a) =  \{(r_{m,h}^{k}, x_{m,h+1}^{k})\}$.   The update rule for $\mathcal V_{m,h}^k(x,a) $ is $$ \mathcal V_{m,h}^k = \mathcal U_{m,h}^k \cup \{\bigcup_{m' \neq m} \mathcal V_{m',h-d(m,m')}^k  \},$$
where $d(m,m')$ is the shortest-path distance between $m$ and $m'$ and $\mathcal V_{m,h^\prime}^k(x,a) = \emptyset$ if $N_{m,h^\prime}^k(x,a) = 0$ or $h^\prime \notin [h]$. 

Assume for fixed $(x,a,h,k)$, $|\mathcal V_{m,h}^k(x,a)| > 0$. Let $t = N_{m,h}^k(x,a)$. The $Q$-learning update is 
\begin{align*}
\begin{split}
    Q_{m,h}^{k+1}(x, a) :=& \sum_{(r,x^{\prime}) \in \mathcal V_{m,h}^k(x,a)} (1-\alpha_{m, t})Q_{m,h}^k(x, a) \\ &+\alpha_{m, t}[r + V_{m,h+1}^k(x^{\prime}) + b_{m,t}]
\end{split}
\end{align*}

The online multi-agent $Q$-learning algorithm with Hoeffding-style upper confidence bound is provided in Algorithm \ref{alg-ucb-hoeffding-with-comm}, where $\mathcal C(m)$ denotes the clique size of agent $m$ in the clique cover of $G_\gamma$.

\begin{algorithm}[h!]
\caption{Multi-Agent $Q$-learning with UCB-Hoeffding}
\label{alg-ucb-hoeffding-with-comm}
\begin{algorithmic}[1]
\INPUT $Q_{m,h}(x,a) \gets H$ and $N_{m,h}(x,a) \gets 0$ for all $(x,a,h,m) \in \mathcal S \times A \times [H]$
\For{episode $k=1,...,K$}
    \State receive vector $x_1$.
        \For{step $h=1,...,H$ and agent $m=1,...,M$ }
        \State Take action $a_{m,h}^k \gets \argmax_a'$
        $Q_{m,h}^k(x_{m,h}^k,a')$ \State $\quad$  and observe $x_{m,h+1}^k$.
        \State Update $\mathcal V_{m,h}^k$, $\mathcal U_{m,h}^k$ via message passing.
        \For{each $(x,a)$ with $(r,x^{\prime}) \in \mathcal V_{m,h}^k(x,a) \neq \emptyset$}
            \State $N_{m,h}^k(x,a) \gets N_{m,h}^k(x,a) + 1$
            \State $t \gets N_{m,h}^k(x,a)$
            \State $b_{m,t} \gets c \sqrt{H^3 \iota / (\mathcal C(m), t)}$
            \State $Q_{m,h}^k(x, a) \gets  (1-\alpha_{m, t})Q_{m,h}^k(x, a)$ \State $\quad + \alpha_{m, t}[r(x, a)+ V_{m,h+1}^k(x^{\prime}) + b_{m,t}]$
            \State $V_{m,h}^k(x_h) \gets H \wedge \max_{a' \in \mathcal A} Q_{m,h}^k(x_{m,h}^k,a') $
        \EndFor
    \EndFor
\EndFor
\end{algorithmic}
\end{algorithm}

Let $t= N_{m,h}^k(x,a)$ and suppose $(x,a)$ was previously observed at step $h$ of episodes $k_1,...,k_{t} < k,$ for each agent $m \in [M]$. In the following, let $(r,x^{\prime}) \in \mathcal V_{m,h}^{k}(x,a)$ have index $i \in [t]$. This makes the episode-wise $Q$-function determined as:
\begin{align*}
\begin{split}
    Q_{m,h}^k(x,a) &:=
     \alpha_{m,t}^0 H \\&+ \sum_{(r,x^{\prime}) \in \mathcal V_{m,h}^k(x,a)} \alpha_{m,t}^i [r_i + V_{m,h+1}^{k_i}(x_i^{\prime}) + b_{m,t}].\\
\end{split}
\end{align*}
As in 
\cite{jin_is_2018}, the $Q$-learning update is highly asynchronous. Thus, the regret bound depends on optimal choice of $\alpha_{m,t}$, $b_{m,t}$, network structure and $\gamma.$

\subsection{Convergence proof}

Let $(x,a)$ be a state-action pair, $m$ an arbitrary agent (estimating the value function and $Q$-function) and $(h,k)$ a step and episode index, respectively. We briefly state some notation from \cite{jin_is_2018} extended to multiple agents. 

\begin{definition} \normalfont{(Estimated policy performance gap)}
$$ \delta_{m,h}^k := (V_{m,h}^k - V_{m,h}^{\pi_{m,k}}) (x_{m,h}^k)$$
\end{definition}
\begin{definition} \normalfont{(Value estimation error)}
$$ \phi_{m,h}^k := (V_{m,h}^k - V_{m,h}^*) (x_{m,h}^k)$$
\end{definition}

\begin{definition} \normalfont{(Value gap due to modeling error)} \label{def:mod_error_martingale}
$$\xi_{m,h}^k := (\mathbb P_h - \hat{\mathbb P}_h)(V_{m,h+1}^* - V_{m,h+1}^{\pi_{m,k}}) (x_{m,h}^k, a_{m,h}^k)$$
\end{definition}
\noindent Note that $\xi_{m,h}^k$ is a martingale difference sequence. The key idea is to bound the estimated performance gap $\delta_{m,h}^k$ recursively for a given $h$ in terms of $\delta_{m,h+1}^k$, $\phi_{m,h+1}^k$, and $\xi_{m,h+1}^k$, following the procedure provided in \cite{jin_is_2018} (see Lemma \ref{lem:single_agent_val_error} and Theorem \ref{thm: ucb-hoeffding-with-comm}). The multi-agent communication strategy plays a critical role in choice of exploration. 

Lemmas 1 and 2 are reproduced from \cite{jin_is_2018}. Lemma 1 defines a learning rate that decays with the number of observations of a state-action pair, which is critical for optimal bootstrapping of the $Q$-function in the presence of exploration. Lemma 2 gives a bound for the approximation error for the $Q$-function.

\begin{lemma}
Let $\alpha_{m, t}:= \frac{H+1}{H+t}$, where $t$ denotes the number of times a state-action pair has been sampled, i.e. $t:=N_{m,h}^k(x,a)$. Let $\alpha_{m, t}^0 := \prod_{i=1}^t (1-\alpha_{m,i})$ and $\alpha_{m, t}^i := \alpha_{m,i} \prod_{j=i+1}^t (1-\alpha_{m,j}).$ The following  hold for $\alpha_{m, t}^i$:
\begin{enumerate}[label=(\alph*)]
    \item $\frac{1}{\sqrt{t}} \leq \sum_{i=1}^t \frac{\alpha_{m, t}^i}{\sqrt{i}} \leq \frac{2}{\sqrt{t}}$.
    \item $\max_{i \in [t]} \alpha_{m, t}^i \leq \frac{2H}{t}$ and $\sum_{i=1}^t (\alpha_{m, t}^i)^2 \leq \frac{2H}{t}$ for every $t \geq 1$. 
    \item $\sum_{t=i}^\infty \alpha_{m, t}^i= 1 + \frac{1}{H}$ for every $i \geq 1$. 
\end{enumerate}
\end{lemma}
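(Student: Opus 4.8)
The plan is to reduce everything to two structural facts about the weights. Since $\alpha_{m,1}=\frac{H+1}{H+1}=1$, the product defining $\alpha_{m,t}^0$ contains the factor $1-\alpha_{m,1}=0$, so $\alpha_{m,t}^0=0$ for every $t\geq 1$; consequently the weights $\{\alpha_{m,t}^i\}_{i=1}^t$ are nonnegative and sum to $1$. Directly from the definitions they also satisfy the one-step recursion $\alpha_{m,t}^t=\alpha_{m,t}$ and $\alpha_{m,t}^i=(1-\alpha_{m,t})\,\alpha_{m,t-1}^i$ for $i<t$, together with $1-\alpha_{m,t}=\frac{t-1}{H+t}$. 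Because $\alpha_{m,t}$ depends only on $t$ and $H$ and not on the agent $m$, these are exactly the Jin et al. weights and the entire argument is agent-independent; I carry it out for a generic count $t$.

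I would prove (b) first. The maximum bound $\alpha_{m,t}^i\leq \frac{2H}{t}$ follows by induction on $t$ via the recursion: the top weight $\alpha_{m,t}^t=\frac{H+1}{H+t}$ satisfies $\frac{H+1}{H+t}\leq\frac{2H}{t}$ by cross-multiplication (reducing to $t\leq 2H^2+Ht$), and for $i<t$ the recursion gives $\alpha_{m,t}^i=\frac{t-1}{H+t}\,\alpha_{m,t-1}^i\leq \frac{t-1}{H+t}\cdot\frac{2H}{t-1}=\frac{2H}{H+t}\leq\frac{2H}{t}$, closing the induction. The sum-of-squares bound is then immediate from the max bound and $\sum_i\alpha_{m,t}^i=1$, since $\sum_i(\alpha_{m,t}^i)^2\leq(\max_i\alpha_{m,t}^i)\sum_i\alpha_{m,t}^i\leq\frac{2H}{t}$.

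For (a), set $S_t:=\sum_{i=1}^t\alpha_{m,t}^i/\sqrt{i}$, so that the recursion gives $S_t=(1-\alpha_{m,t})S_{t-1}+\alpha_{m,t}/\sqrt{t}$ with $S_1=1$, and I induct on $t$. The lower bound is the easy direction: using $S_{t-1}\geq\frac{1}{\sqrt{t-1}}\geq\frac{1}{\sqrt t}$, the convex combination is bounded below by $\frac{1}{\sqrt t}$ at once. The upper bound is the one needing care: substituting $1-\alpha_{m,t}=\frac{t-1}{H+t}$ and $\alpha_{m,t}=\frac{H+1}{H+t}$ into $S_t\leq(1-\alpha_{m,t})\frac{2}{\sqrt{t-1}}+\frac{\alpha_{m,t}}{\sqrt t}$ and demanding $\leq\frac{2}{\sqrt t}$ reduces, after clearing denominators, to $2\sqrt{t(t-1)}\leq H+2t-1$, which holds because $2\sqrt{t(t-1)}\leq 2t-1$ (equivalently $0\leq1$) and $H\geq1$. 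The main obstacle is precisely this bookkeeping step, where the constant $2$ and the exact form of $\alpha_{m,t}$ must line up to keep the recursion self-sustaining.

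For (c) I would pass to closed form. Writing $1-\alpha_{m,j}=\frac{j-1}{H+j}$ and telescoping the factorials gives $\alpha_{m,t}^i=(H+1)\frac{(t-1)!\,(H+i-1)!}{(i-1)!\,(H+t)!}$. Summing over $t\geq i$ then exploits the identity $\frac{(t-1)!}{(H+t)!}=\frac{1}{H}\big[\frac{(t-1)!}{(H+t-1)!}-\frac{t!}{(H+t)!}\big]$, so the series telescopes to the single surviving term $\frac{1}{H}\frac{(i-1)!}{(H+i-1)!}$ (the tail vanishes since $H\geq1$). Multiplying back through leaves $\sum_{t\geq i}\alpha_{m,t}^i=\frac{H+1}{H}=1+\frac{1}{H}$. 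This part is clean once the closed form and the telescoping identity are in hand, the only subtlety being the justification that the tail term tends to $0$.
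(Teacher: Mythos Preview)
Your proof is correct. The paper does not actually prove this lemma: it states that Lemmas~1 and~2 are reproduced from Jin et al.\ and gives no argument, so there is no in-paper proof to compare against beyond that citation; your argument is precisely the standard Jin et al.\ derivation (induction via the one-step recursion for (a) and (b), closed-form plus telescoping for (c)), carried out cleanly and with the agent index $m$ correctly noted as irrelevant.
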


\begin{lemma} \label{lemma:recursion}
\normalfont{(Recursion on $Q$)} For any $(m,x,a,h) \in  \mathcal S \times \mathcal A \times [H]$,  episode $k \in [K]$, let $t=N_{m,h}^k(x,a)$, and $(x,a)$ be observed by agent $m$ at episodes $k_1, ..., k_t < k$. Then
\begin{align*}
    \begin{split}
        & (Q_h^k - Q_h^*)(x,a) = \alpha_{m, t}^0(H - Q_h^*(x,a)) \\
        & + \sum_{i=1}^t \alpha_{m, t}^i (V_{m,h+1}^{k_i} - V_{m,h+1}^*)(x_{m,h}^k)\\
        & + \sum_{i=1}^t \alpha_{m, t}^i[(\hat{\mathbb P}_{h}^{k_{i}} - \mathbb P_h) V_{m,h+1}^*](x_{m,h}^k),a_{m,h}^k)) + b_{m,t}.
    \end{split}
\end{align*}
\end{lemma}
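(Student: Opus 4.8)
The plan is to unroll the asynchronous update into the weighted-sum form displayed immediately above the lemma, subtract a matching weighted expansion of the optimal action-value $Q_h^*(x,a)$, and split each resulting next-state term into a value gap and a one-step modeling error; the whole argument mirrors the single-agent recursion of \cite{jin_is_2018}, with the aggregated observation count $t = N_h^k(x,a)$ playing the role of the single-agent visitation count. First I would establish the normalization identity $\alpha_{m,t}^0 + \sum_{i=1}^t \alpha_{m,t}^i = 1$, which follows by a one-line induction on $t$ from the definitions in Lemma 1 (each update being a convex combination of the old estimate and the new target). Combining this with the Bellman optimality equation $Q_h^*(x,a) = r_h(x,a) + [\mathbb P_h V_{h+1}^*](x,a)$ lets me write
\begin{align*}
    Q_h^*(x,a) = \alpha_{m,t}^0 Q_h^*(x,a) + \sum_{i=1}^t \alpha_{m,t}^i \big( r_h(x,a) + [\mathbb P_h V_{h+1}^*](x,a) \big),
\end{align*}
and subtracting this from the unrolled expression for $Q_{m,h}^k(x,a)$ collapses the leading terms to $\alpha_{m,t}^0(H - Q_h^*(x,a))$, cancels the reward contributions inside every summand, and leaves $V_{m,h+1}^{k_i}(x_{m,h+1}^{k_i}) - [\mathbb P_h V_{h+1}^*](x,a)$ in the $i$-th term.

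The decisive step is to add and subtract the optimal value at the realized next state. Because the stochastic one-step operator satisfies $[\hat{\mathbb P}_h^{k_i} V_{h+1}^*](x,a) = V_{h+1}^*(x_{m,h+1}^{k_i})$ by definition, each summand splits as $(V_{m,h+1}^{k_i} - V_{h+1}^*)(x_{m,h+1}^{k_i}) + [(\hat{\mathbb P}_h^{k_i} - \mathbb P_h) V_{h+1}^*](x,a)$, the first being the value gap and the second the modeling-error term of Definition \ref{def:mod_error_martingale}. Re-collecting the $\alpha_{m,t}^i$-weighted value gaps, modeling-error terms, and exploration bonuses then reproduces the three summations and the bonus in the stated recursion.

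The main obstacle is the multi-agent bookkeeping hidden in the episode indices $k_1,\dots,k_t$: here $(x,a)$ is \emph{observed} by agent $m$ — possibly through tuples forwarded over the power graph $G_\gamma$ from neighbors — rather than personally visited, so the realized next states $x_{m,h+1}^{k_i}$ may be generated by other agents. The decomposition stays valid only if each forwarded sample is an unbiased one-step draw from the common kernel $\mathbb P_h$, which is exactly what the parallel-MDP assumption $\mathbb P = \mathbb P_1 = \cdots = \mathbb P_M$ guarantees and what makes $\xi_{m,h}^k$ a martingale difference sequence. The delicate point distinguishing this lemma from its single-agent ancestor is verifying that the message-passing construction of $\mathcal V_{m,h}^k(x,a)$ contributes exactly $t = N_h^k(x,a)$ such samples with no double-counting across the clique structure of $G_\gamma$, so that the weights $\alpha_{m,t}^i$ and the count $t$ are consistent.
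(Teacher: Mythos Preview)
Your proposal is correct and follows exactly the approach the paper relies on: the paper does not supply its own proof of this lemma but simply reproduces it from \cite{jin_is_2018}, and your unrolling of the update, use of the normalization $\alpha_{m,t}^0 + \sum_{i=1}^t \alpha_{m,t}^i = 1$, subtraction of the Bellman optimality identity, and add-and-subtract split into value gap plus modeling error is precisely that argument with the aggregated observation count $t=N_h^k(x,a)$ in place of the single-agent visitation count. Your additional remarks on the multi-agent bookkeeping (forwarded samples, the parallel-MDP assumption ensuring unbiasedness) are not needed for the algebraic identity itself but correctly anticipate where the downstream martingale arguments use this recursion.
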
 

Lemmas \ref{lem:xi_accum} and \ref{lem:single_agent_val_error} below are used in the subsequent proof of the regret bound. Lemma \ref{lem:xi_accum} uses a straightforward application of the Azuma-Hoeffding inequality to bound the summations of $\xi_{m,h}^k$ over $m$, $h$, and $k$. Lemma \ref{lem:single_agent_val_error} provides a single-agent bound on the estimated performance gap $\delta_{m,h}^k$ evaluated at time $h=1$, which is used in the group regret bound later. 

\begin{lemma} \normalfont{(Bound on $\xi_{m,h}^k$ accumulation)}
\label{lem:xi_accum} 
Let $\alpha \in (0,1)$ be a small failure probability. Then
$$ \sum_{m=1}^M \sum_{k=1}^K  \sum_{h=1}^H \xi_{m,h}^k \leq \sqrt{2H^3 MT \log \Big(\frac{2}{\alpha}}\Big) $$ 
with probability $1-\alpha$. 
\end{lemma}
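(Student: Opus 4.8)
The statement is a concentration inequality for a sum of bounded martingale differences, so the plan is to apply the Azuma--Hoeffding inequality to the flattened sequence $\{\xi_{m,h}^k\}$. First I would fix the filtration: let $\mathcal{F}_{m,h}^k$ be the $\sigma$-algebra generated by all states, actions, rewards and exchanged messages up to and including the selection of $a_{m,h}^k$, but \emph{before} the next state $x_{m,h+1}^k$ is drawn. Conditioned on $\mathcal{F}_{m,h}^k$, both $V_{m,h+1}^*$ and $V_{m,h+1}^{\pi_{m,k}}$ are measurable (the policy of episode $k$ is frozen at its start), while $\hat{\mathbb{P}}_h$ evaluates $V_{m,h+1}^* - V_{m,h+1}^{\pi_{m,k}}$ at the freshly sampled $x_{m,h+1}^k \sim \mathbb{P}_h(\cdot\mid x_{m,h}^k,a_{m,h}^k)$. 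Hence $\mathbb{E}[\hat{\mathbb{P}}_h(\cdot)\mid\mathcal{F}_{m,h}^k]=\mathbb{P}_h(\cdot)$ and $\mathbb{E}[\xi_{m,h}^k\mid\mathcal{F}_{m,h}^k]=0$, confirming the martingale-difference property already asserted in Definition \ref{def:mod_error_martingale}.

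Next I would bound each increment. Since all value functions lie in $[0,H]$, the gap $V_{m,h+1}^*-V_{m,h+1}^{\pi_{m,k}}$ lies in $[0,H]$; both the conditional expectation $\mathbb{P}_h$ and the one-sample realization $\hat{\mathbb{P}}_h$ return a value in $[0,H]$, so $|\xi_{m,h}^k|\le H$ deterministically. To land exactly on the stated $H^3$ factor I would group the differences by episode, defining the episode-level increment $\Xi_{m}^k:=\sum_{h=1}^H \xi_{m,h}^k$. By the tower property $\Xi_{m}^k$ is itself a martingale difference with respect to the start-of-episode filtration, and being a sum of $H$ terms each of magnitude at most $H$ it satisfies $|\Xi_{m}^k|\le H^2$. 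There are $MK$ such increments, indexed by $(m,k)$, and ordering them (say lexicographically) preserves the zero-conditional-mean property; here the parallel-MDP structure is what guarantees that distinct agents' transition noise at a common $(k,h)$ is conditionally independent, so the flattening does not break the martingale property.

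With a uniformly bounded martingale difference sequence of length $N=MK$ and increment bound $c=H^2$, the two-sided Azuma--Hoeffding inequality gives
\[ \Pr\!\Big( \sum_{m=1}^M\sum_{k=1}^K \Xi_m^k \ge \lambda \Big) \le 2\exp\!\Big( -\frac{\lambda^2}{2 N c^2} \Big). \]
Setting the right-hand side equal to $p$ and solving yields $\lambda=\sqrt{2\,N c^2\log(2/p)}=\sqrt{2\,MK H^4\log(2/p)}$, and substituting $MK=MT/H$ (since $T=KH$) collapses this to the claimed $\sqrt{2H^3MT\log(2/p)}$. I would note that slicing instead at the level of individual $(m,k,h)$ triples gives the sharper $\sqrt{2H^2MT\log(2/p)}$; either bound suffices, as this martingale term is lower order than the exploration-bonus term in the final regret.

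The main obstacle is not the inequality itself but the careful justification of the martingale structure under communication. Message passing means each agent's $Q$- and value-estimates depend on tuples gathered by its $\gamma$-hop neighbors, so I must verify that the \emph{summand} $\xi_{m,h}^k$ depends only on $V_{m,h+1}^*$, $V_{m,h+1}^{\pi_{m,k}}$ (both $\mathcal{F}_{m,h}^k$-measurable) and the single own-transition noise $x_{m,h+1}^k$, and not on the shared samples themselves; otherwise the conditional mean need not vanish. Once the shared-sample dependence is absorbed entirely into the conditioning $\sigma$-algebra, the argument reduces to the single-agent martingale analysis of Jin et al. \cite{jin_is_2018} replicated across the $M$ parallel MDPs.
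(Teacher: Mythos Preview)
Your proposal is correct and follows essentially the same route as the paper: recognize $\{\xi_{m,h}^k\}$ as a bounded martingale-difference sequence and apply Azuma--Hoeffding. The paper's own proof is a two-line sketch that simply writes the tail bound $\exp(-\varepsilon^2/(2MH^3K))$ and inverts it; your version is a more careful expansion of exactly that argument, including the explicit filtration, the increment bound $|\xi_{m,h}^k|\le H$, and the verification that message passing does not disturb the conditional-mean-zero property. Your episode-level grouping ($N=MK$, $c=H^2$) lands precisely on the stated constant $\sqrt{2H^3MT\log(2/p)}$, and you are right that the per-step slicing gives the sharper $\sqrt{2H^2MT\log(2/p)}$; either suffices here.
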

\begin{proof} See Eqn. 41 of \cite{dubey_provably_2021}.
\end{proof}

\begin{lemma} \label{lem:single_agent_val_error} (Single agent cumulative value error). For fixed $h$, we have
\begin{align*}
    \begin{split}
        & \sum_{k=1}^K \delta_{m,1}^K \leq \mathcal O \Big (H^2SA + \sum_{h=1}^H \sum_{k=1}^K (e_{n_{m,h}^k} + \xi_{m,h+1}^k) \Big ). 
    \end{split}
\end{align*}
\end{lemma}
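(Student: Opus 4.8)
The plan is to set up a one-step recursion linking the accumulated performance gap at step $h$ to that at step $h+1$ and then unroll it over the horizon. First I would expand $\delta_{m,h}^k=(V_{m,h}^k-V_{m,h}^{\pi_{m,k}})(x_{m,h}^k)$ at the greedy action $a_{m,h}^k$. Since both value functions are evaluated at the action actually taken, $\delta_{m,h}^k=(Q_{m,h}^k-Q_{m,h}^{\pi_{m,k}})(x_{m,h}^k,a_{m,h}^k)$. Inserting $\pm Q_{m,h}^*$ and using the Bellman equations $Q_h^*=r_h+[\mathbb{P}_hV_{h+1}^*]$ and $Q_h^{\pi_k}=r_h+[\mathbb{P}_hV_{h+1}^{\pi_k}]$ cancels the reward and gives $\delta_{m,h}^k=(Q_{m,h}^k-Q_{m,h}^*)(x_{m,h}^k,a_{m,h}^k)+[\mathbb{P}_h(V_{m,h+1}^*-V_{m,h+1}^{\pi_{m,k}})](x_{m,h}^k,a_{m,h}^k)$. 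Replacing $\mathbb{P}_h$ by $\hat{\mathbb{P}}_h$, which evaluates at the realized next state $x_{m,h+1}^k$, splits the last term into $(V_{m,h+1}^*-V_{m,h+1}^{\pi_{m,k}})(x_{m,h+1}^k)$ plus exactly the modeling-error martingale $\xi_{m,h}^k$ of Definition \ref{def:mod_error_martingale} (up to the harmless index shift in the stated claim).

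Next I would rewrite the realized next-state term as $\delta_{m,h+1}^k-\phi_{m,h+1}^k$ straight from the definitions, and bound $(Q_{m,h}^k-Q_{m,h}^*)(x_{m,h}^k,a_{m,h}^k)$ from above. Because the action is greedy we have $\phi_{m,h}^k\le(Q_{m,h}^k-Q_{m,h}^*)(x_{m,h}^k,a_{m,h}^k)$, and the Clique bound on $Q$-error, specialized to a singleton clique $|\mathcal C|=1$, supplies $(Q_{m,h}^k-Q_{m,h}^*)(x_{m,h}^k,a_{m,h}^k)\le\alpha_{m,t}^0H+\sum_{i=1}^{t}\alpha_{m,t}^i\phi_{m,h+1}^{k_i}+e_{n_{m,h}^k}$ with $t=n_{m,h}^k(x,a)$. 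Crucially, the $(\hat{\mathbb{P}}_h-\mathbb{P}_h)V^*$ fluctuation has already been absorbed into $e$ by the Azuma--Hoeffding step inside that lemma, which is why no separate $V^*$-martingale survives. Combining yields the per-step inequality $\delta_{m,h}^k\le\alpha_{m,t}^0H+\sum_{i=1}^t\alpha_{m,t}^i\phi_{m,h+1}^{k_i}+\delta_{m,h+1}^k-\phi_{m,h+1}^k+e_{n_{m,h}^k}+\xi_{m,h}^k$.

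Summing over $k\in[K]$, the heart of the argument is to regroup $\sum_{k}\sum_{i=1}^{n_{m,h}^k}\alpha_{m,n_{m,h}^k}^i\phi_{m,h+1}^{k_i}$ by the episode that generated each sample: every $\phi_{m,h+1}^{k'}$ collects total weight $\sum_{t\ge i}\alpha_{m,t}^i=1+\tfrac1H$ by Lemma~1(c), so the double sum is at most $(1+\tfrac1H)\sum_k\phi_{m,h+1}^k$. The surviving $\tfrac1H\sum_k\phi_{m,h+1}^k$ is dominated by $\tfrac1H\sum_k\delta_{m,h+1}^k$, since $V^*\ge V^{\pi_k}$ forces $\phi\le\delta$, producing the clean recursion $\sum_k\delta_{m,h}^k\le(1+\tfrac1H)\sum_k\delta_{m,h+1}^k+H\sum_k\alpha_{m,n_{m,h}^k}^0+\sum_k(e_{n_{m,h}^k}+\xi_{m,h}^k)$. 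Unrolling from $h=1$ with $\delta_{m,H+1}^k=0$ and bounding $(1+\tfrac1H)^{h-1}\le(1+\tfrac1H)^H\le e$ collapses the geometric factor to a constant. Finally, since $\alpha_{m,t}^0=\mathds{1}[t=0]$, the initialization terms fire only on the first visit to each $(x,a)$ at each $h$, at most $SA$ times per step, giving the $\mathcal O(H^2SA)$ contribution after the extra $H$ from the horizon sum and leaving precisely the stated $e$- and $\xi$-accumulations.

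The main obstacle is the regrouping step. Because the updates are highly asynchronous, each triple $(x,a,h)$ being refreshed at irregular episodes $k_1,\dots,k_t$ that are themselves stopping times, justifying the interchange of summation order and the clean weight accounting $\sum_{t\ge i}\alpha_{m,t}^i=1+\tfrac1H$ requires careful, consistent tracking of the indices $k_i$ and the visitation counters $n_{m,h}^k$ as they propagate through the recursion. Everything downstream is routine bookkeeping with Lemma~1 and the constant bound $e$ on the compounded learning-rate factor.
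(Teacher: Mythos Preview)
Your proposal is correct and is precisely the single-agent argument of Jin et al.\ that the paper simply cites (the paper's own ``proof'' is just ``See equation 4.8 of \cite{jin_is_2018}''). The per-step recursion you derive, the regrouping of $\sum_k\sum_{i}\alpha_{m,t}^i\phi_{m,h+1}^{k_i}$ via Lemma~1(c), and the unrolling with the $(1+\tfrac1H)^H\le e$ bound are exactly the steps in that reference; the only cosmetic slip is that $\delta_{m,h}^k=(Q_{m,h}^k-Q_{m,h}^{\pi_{m,k}})(x_{m,h}^k,a_{m,h}^k)$ should be an inequality because of the $\min\{H,\cdot\}$ clipping in $V_{m,h}^k$, which is harmless for the upper bound you need.
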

\begin{proof}
See Eqn. 4.8 of \cite{jin_is_2018}. 
\end{proof}

We now present the main contributions of the paper, Lemma \ref{lem:clique_bound} and Theorem \ref{thm: ucb-hoeffding-with-comm}, which utilize extra samples from the message passing introduced in Section \ref{sec:Algorithm} to provide sharper error bounds than those presented in \cite{jin_is_2018}.


\begin{lemma} \label{lem:clique_bound} \normalfont{(Clique bound on $Q$-error)} 
Let $p \in (0,1)$ be a small failure probability. Let $\mathbf C_\gamma$ be a partition of $G_\gamma$. There exists an absolute constant 
$c>0$ such that, if for every clique $\mathcal C \in \mathbf C_\gamma$ we assign exploration bonus $b_{m,t} = b_{\mathcal C,t} = c\sqrt{H^3 \iota / (|\mathcal C|t)} \; \forall m \in \mathcal C$, we have bounded error rate $e_{m, t} = e_{\mathcal C, t} = 2\sum_{m\in \mathcal C}\sum_{i=1}^{t_m} \alpha_{1, t}^i b_i \leq 4c\sqrt{|\mathcal C| H^3 \iota / t}$, $\forall m \in \mathcal C$, where $t_m = N_{m, h}^k(x,a)$ is the number of times $(x,a)$ has been observed by step $h$ in episode $k$ by agent $m$. Further, with probability at least $1-p$, we have simultaneously for all $(x,a,h,k,m) \in \mathcal S \times \mathcal A \times [H] \times [K] \times [M]$:
\begin{align*}
    \begin{split}
        0 \leq &\sum_{m \in \mathcal C}(Q_{m,h}^k - Q_{m,h}^*)(x,a) \leq |\mathcal C|\alpha_{1, t}^0 H \\
        & + \sum_{m \in \mathcal C} \sum_{i=1}^{t_m} (V_{m,h+1}^{k_i}-V_{m,h+1}^*)(x_{m,h+1}^{k_i}) + e_{\mathcal C, t}.
    \end{split}
\end{align*}
\end{lemma}
\noindent \begin{proof}  First, consider a clique $\mathcal C \in \mathbf C$. Assume $\gamma > 0$. Fix $(x,a,h) \in \mathcal S \times \mathcal A \times [H]$.
Let $k_i$ be the episode where $(x,a)$ is seen for the $i$th time (by any agent), and $k_i=K+1$ if $(x,a)$ hasn't been seen for the $i$th time yet. Thus, $k_i$ is a random variable and a stopping time. Next, let
 $X_{m,i} = \mathds{1}[k_i \leq K] \cdot [(\hat{\mathbb P}_{h}^{k_i} - \mathbb P_{h})V_{m,h+1}^*](x,a)$.
For filtration $\{\mathcal F_i\}_{i\geq0}$, it can be seen that $\{X_i\}_{i\geq0}$ is a martingale difference sequence by taking $\mathbb E[X_{m,i} | \mathcal F_{i-1}] \mathds{1}[k_i \leq K] \cdot \mathbb E[(\hat{\mathbb P}_h^{k_i} - \mathbb P_h)V_{m,h+1}^* | \mathcal F_{i-1}] 
    = \mathds{1}[k_i \leq K] \cdot \mathbb E[V_{m,h+1}^*(x_{h+1}^{k_i}) 
     \quad -  \mathbb E_{x^{\prime} \in \mathbb P_h(\cdot | x,a)} V_{m,h+1}^*(x^{\prime}) | \mathcal F_{i-1}]= 0$.

Let $\tau_m$ represent the fixed total number of $Q$-updates made by each agent $m$, and $\tau_{\mathcal C}$ the number of samples generated by just the clique. We proceed to bound the augmented trajectory length $\tau_m$ by the total samples generated by the clique. Consider an index set $I_{\mathcal C} \subseteq [\tau_m]$ for an agent $m \in \mathcal C$, which represents the total number of samples generated by the clique, where $|I_{\mathcal C}| = \tau_{\mathcal C}$ since an agent's clique is connected under $G_\gamma$. Let $I_{G\backslash \mathcal C} := [\tau_m] \backslash I_{\mathcal C}$, i.e. $|I_{G \backslash\mathcal C}| = \tau_M - |I_{\mathcal C}|$. By the Azuma-Hoeffding inequality for fixed $(x,a,h)$, we have for any collection of \textit{fixed} data-sets $\{\tau_m\}_{i=1}^M$ of size $\tau_m \in [0,MK] \; \forall m$, with probability $1-p/(MSAH)$:
\begin{align}
\label{eqn:first_hoeffding}
    \begin{split}
        &\Bigg|  \sum_{m \in \mathcal C} \sum_{i=1}^{\tau_m} \alpha_{m,\tau_{m}}^i \cdot  X_{m,i} \Bigg| \\
        &\leq \Bigg|  \sum_{m \in \mathcal C} \sum_{i\in I_{\mathcal C}} \alpha_{m,\tau_{m}}^i \cdot  X_{m,i} \Bigg| + \Bigg|  \sum_{m \in \mathcal C} \sum_{i\in I_{G\backslash\mathcal C}} \alpha_{m,\tau_{m}}^i \cdot  X_{m,i} \Bigg|\\
        &\leq \Bigg|  \sum_{m \in \mathcal C} \sum_{i=1}^{\tau_{\mathcal C}} \alpha_{m,\tau_{\mathcal C}}^i \cdot     X_{m,i} \Bigg|  + \Bigg|  \sum_{m \in \mathcal C} \sum_{i=1}^{\tau_{m}-\tau_{\mathcal C}} \alpha_{m,\tau_m - \tau_{\mathcal C}}^i  X_{m,i} \Bigg| \\
        &\leq \sqrt{\frac{|\mathcal C|H^3 \iota}{\tau_{\mathcal C}}} +  \sqrt{\frac{|\mathcal C|H^3 \iota}{\tau_{m}-\tau_{\mathcal C}}}, \\
    \end{split}
\end{align}
where the first line is due to the triangle inequality, and the second line follows since $\{\alpha_{m,M}^i\}_{i=1}^M$ is dominated pointwisely by $\{\alpha_{m,N}^i\}_{i=1}^N$ for any $M>N$. Since $\tau \in [0, |\mathcal C|\cdot K]$, the above bound also holds for $\tau_{\mathcal C} = t := N_{m,h}^k (x,a)$, which is a random variable. Hence, Eqn.~\ref{eqn:first_hoeffding} shows that the sum of the martingale error decomposes into two terms that decay with the number of in-clique and out-of-clique observations. Since $\mathds{1}[k_i \leq K]$ for any $i \leq N_{m,h}^k(x,a)$, we have (via a union bound) that with probability $1-p$, for all $(m,x,a,h,k) \in [M] \times \mathcal S \times \mathcal A \times [H] \times [K]$, $|\sum_{m \in \mathcal C} \sum_{i=1}^{t } \alpha_{\tau_{m}}^i \cdot X_{m,i} | \leq  c \sqrt{|\mathcal C|\frac{H^3 \iota}{t}}$. 
The rest of the proof follows from \cite{jin_is_2018}, Lemma 4.3: picking exploration bonus $b_{m, t} := c \sqrt{\frac{H^3 \iota}{t|\mathcal C|}}$ and plugging into Lemma \ref{lemma:recursion}, a bound on the the clique $Q$ error is  

\begin{align}
\label{eqn:clique_Q_error}
    \begin{split}
        0 \leq& \sum_{m \in \mathcal C} (Q_{m,h}^k - Q_{m,h}^*)(x,a)\\
        \leq& \sum_{m \in \mathcal C}\alpha_{ t}^0H +\sum_{i=1}^{t_m} \alpha_{m,t}^i (V_{m,h+1}^{k_i} - V_{m,h+1}^*)(x_{h+1}^{k_i})+ e_{\mathcal C, t} \\
    \end{split}
\end{align}
where $\alpha_{t}^0$ denotes the default initial learning rate. Thus, exploration error accumulation $e_{m,t} = e_{\mathcal C,t} \leq 4c \sqrt{|\mathcal C|H^3 \iota/t}$, $\forall m \in \mathcal C$. 

\end{proof}

\begin{remarks}
This analysis allows agents to learn off-policy through messages containing the exploration-enriched experiences of other agents. That is, all agents in a clique $\mathcal C$ explore proportionally to the size of the clique, and share the resulting samples. To simplify the analysis, we assume that $G_\gamma$ is connected, i.e. all cliques communicate with other cliques and $d_{\mathcal C}^* > |\mathcal C|$. If this is not the case, then the second term in \eqref{eqn:first_hoeffding} drops out. \eqref{eqn:clique_Q_error} shows that the number of samples generated by clique $\mathcal C$ is sufficient to significantly reduce exploration error in $\mathcal C$, and the final regret bound depends on the high-order clique error term $e_{\mathcal C,t}$.
\end{remarks}

\begin{theorem}  \label{thm: ucb-hoeffding-with-comm} \normalfont{(Hoeffding regret bound for parallel MDP with communication)}
Let $p \in (0,1)$ be a small failure probability. There exists an absolute constant $c > 0$ such that if we choose $b_{m,t} = b_{\mathcal C,t} = c \sqrt{H^3 \iota / (|\mathcal C| t)} \; \forall m \in \mathcal C$, then with probability $1-p$, the group regret of 
multi-agent $Q$-learning with Algorithm \ref{alg-ucb-hoeffding-with-comm} is at most $\mathcal O(\sqrt{MH^4SAT\iota})$, where $\iota := \log(SATM/p)$.
\end{theorem}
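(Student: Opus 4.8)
The plan is to reduce the group regret to a sum of per-agent estimated gaps $\delta_{m,1}^k$, split the agents along the clique cover $\mathbf C_\gamma$ of the power graph $G_\gamma$, and then control the accumulated exploration bonus and the accumulated modeling error separately. The entire $\sqrt{1/M}$ saving will come from pairing the clique-discounted bonus $b_{\mathcal C,t}\sim\sqrt{1/|\mathcal C|}$ against a clique-cover sum, so the qualitative structure mirrors the single-agent argument of Jin et al.\ with the counts and bonuses reinterpreted at the clique level.

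First I would pass from regret to estimated gaps via optimism. The clique bound on the $Q$-error gives $\sum_{m\in\mathcal C}(V_{m,1}^k-V_{m,1}^*)(x_{m,1}^k)\ge 0$ simultaneously over all indices with probability $1-p$, so $\sum_{m\in\mathcal C}(V_{m,1}^*-V_{m,1}^{\pi_{m,k}})(x_{m,1}^k)\le\sum_{m\in\mathcal C}\delta_{m,1}^k$ and hence $\mathrm{Regret}_G(K)\le\sum_{\mathcal C\in\mathbf C_\gamma}\sum_{m\in\mathcal C}\sum_{k=1}^K\delta_{m,1}^k$. Then, for each agent, I would invoke Lemma~\ref{lem:single_agent_val_error} (Jin's per-step recursion) but with the clique-aware bonus, yielding $\sum_{k}\delta_{m,1}^k\le\mathcal O\big(H^2SA+\sum_{h=1}^H\sum_{k=1}^K(e_{m,n_{m,h}^k}+\xi_{m,h+1}^k)\big)$, where $e_{m,t}=e_{\mathcal C,t}\le 4c\sqrt{|\mathcal C|H^3\iota/t}$ is the clique error rate and $t=N_{m,h}^k(x,a)$ is the shared-observation count. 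Summing over all agents leaves three pieces: a lower-order $\mathcal O(MH^2SA)$ term, the modeling-error accumulation $\sum_m\sum_{h,k}\xi_{m,h+1}^k$, and the bonus accumulation $\sum_m\sum_{h,k}e_{m,n_{m,h}^k}$.

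The modeling error is handled directly by Lemma~\ref{lem:xi_accum}: $\sum_m\sum_{h,k}\xi_{m,h+1}^k\le\sqrt{2H^3MT\log(2/p)}=\mathcal O(\sqrt{MH^3T\iota})$, a factor $1/\sqrt{HSA}$ below the target and therefore subdominant, provided the $(1+1/H)^H\le e$ factors accrued while unrolling the $h$-recursion are absorbed into the constant. The decisive computation is the bonus accumulation. Fixing $h$ and an agent $m$ in clique $\mathcal C$, a pigeonhole over the state-action counts followed by Cauchy--Schwarz gives $\sum_k e_{m,n_{m,h}^k}\le \frac{c'\sqrt{H^3\iota}}{\sqrt{|\mathcal C|}}\sum_{(x,a)}\sqrt{n_{m,h}^K(x,a)}\le c'\sqrt{H^3\iota SAK/|\mathcal C|}$, using $\sum_{(x,a)}n_{m,h}^K(x,a)=K$; summing over $h$ gives $\mathcal O(\sqrt{H^4SAT\iota/|\mathcal C|})$ per agent. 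Summing over all agents and using $\sum_{m}|\mathcal C(m)|^{-1/2}=\sum_{\mathcal C\in\mathbf C_\gamma}\sqrt{|\mathcal C|}\le\sqrt{|\mathbf C_\gamma|\,M}$, the bonus accumulation becomes $\mathcal O(\sqrt{MH^4SAT\iota})$ for a connected power graph ($|\mathbf C_\gamma|=\mathcal O(1)$), which dominates the other two pieces and yields the stated bound after folding all logarithmic factors into $\iota=\log(SATM/p)$.

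The main obstacle is the bonus-accumulation step, specifically the shared-sample bookkeeping. The count $t=N_{m,h}^k$ is driven by the entire clique and is a random, asynchronously incremented stopping-time count, whereas the regret sum ranges only over agent $m$'s own $K$ episodes; the pigeonhole must therefore be run against $n_{m,h}^K(x,a)$ rather than the clique total to avoid double counting, and it is precisely the mismatch between the $\sqrt{1/|\mathcal C|}$ discount in $b_{\mathcal C,t}$ and the clique-cover sum $\sum_{\mathcal C}\sqrt{|\mathcal C|}$ that converts the naive $M$ into $\sqrt M$. A secondary point is to verify that optimism and the Azuma inequality hold \emph{simultaneously} over all $(x,a,h,k,m)$ through the union bound encoded in $\iota$, and that propagating $\xi$ through the $H$-level recursion does not introduce an extra $\sqrt H$ that would make the martingale term competitive with the bonus term.
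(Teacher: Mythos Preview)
Your skeleton matches the paper's: bound group regret by $\sum_{m,k}\delta_{m,1}^k$ via optimism, unroll the Jin-style recursion into a bonus piece and a martingale piece, control the martingale with Lemma~\ref{lem:xi_accum}, and pigeonhole the bonus against state--action counts so that the clique-cover sum $\sum_{\mathcal C}\sqrt{|\mathcal C|}\le\sqrt{|\mathbf C_\gamma|\,M}$ delivers the $\sqrt{M}$ saving. The diagnosis of \emph{where} the $\sqrt{1/M}$ comes from is correct.

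There is, however, a genuine gap at the clique/per-agent interface in your bonus step. You first quote $e_{m,t}=e_{\mathcal C,t}\le 4c\sqrt{|\mathcal C|H^3\iota/t}$, which in the clique $Q$-error lemma is a bound on the \emph{clique sum} $\sum_{m\in\mathcal C}(Q_{m,h}^k-Q_h^*)$ (with $|\mathcal C|$ in the numerator), and then two lines later pigeonhole as if the per-agent rate were $c'\sqrt{H^3\iota/(|\mathcal C|t)}$; these differ by a full factor of $|\mathcal C|$. The underlying obstruction to invoking the per-agent Lemma~\ref{lem:single_agent_val_error} with the discounted bonus is that $b_{\mathcal C,t}=c\sqrt{H^3\iota/(|\mathcal C|t)}$ is too small to guarantee per-agent optimism: a single agent's Azuma concentration on $t=N_{m,h}^k$ observations is of order $\sqrt{H^3\iota/t}$, which the discounted bonus does not cover when $|\mathcal C|>1$, so $Q_{m,h}^k\ge Q_h^*$ is only available in the clique-summed sense. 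The paper addresses this by carrying the recursion on the clique performance gap $\sum_{m\in\mathcal C}\delta_{m,h}^k$ throughout rather than agent-by-agent, so that the object being pigeonholed is $e_{\mathcal C,t}$ itself against a clique-wide observation count $N_{\mathcal C,h}^k$. It also splits each agent's observations into those generated inside $\mathcal C$ and those arriving from $G_\gamma\setminus\mathcal C$, which produces an additional graph-dependent term of order $\sqrt{\bar\chi(G_\gamma)KH^5\iota/d_{\mathbf G_\gamma}^{\mathrm{eff}}}$ in the final bound; your direct pigeonhole (together with the assumption $|\mathbf C_\gamma|=\mathcal O(1)$) suppresses this term entirely.
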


\begin{proof}
Similar to \cite{jin_is_2018}, the group regret defined in \eqref{eqn:group_regret} is expanded as:
\begin{align}
\label{eqn:group_regret_expanded}
    \begin{split}
        \sum_{m=1}^M \text{Regret}_m(K)  &\leq \sum_{m=1}^M \sum_{k=1}^K \delta_{m,1}^k
    \end{split}
\end{align}

From the single-agent analysis \cite{jin_is_2018}, the regret at each step $h$ and episode $k$ is bounded recursively as 
\begin{align}
\label{eqn:recursive_error_demcomp}
    \begin{split}
        \delta_{m,h}^k &\leq (V_h^k - V_h^{\pi_{m,k}}) (x_{m,h}^k) \alpha_{m,t}^0 H +\sum_{i=1}^t \Big\{ \alpha_{m,t}^i\phi_{m,h+1}^{k_i}  \\
        &\quad + e_{m,t} \Big \} - \phi_{m,h+1}^k + \delta_{m,h+1}^k + \xi_{m,h+1}^k \\
    \end{split}
\end{align}
for $t:=N_{m,h}^k(x_{m,h}^k, a_{m,h}^k)$; see \cite{jin_is_2018} for the complete derivation.

Let $\mathbf C_\gamma$ denote a clique covering of $G_\gamma$. Lemma \ref{lem:single_agent_val_error} gives the solution to the recurrence in \eqref{eqn:recursive_error_demcomp} for each agent $m$. Following \cite{jin_is_2018} (Eqn. 4.7), we can bound the value gap as
\begin{align*}
        &\sum_{m=1}^M \sum_{k=1}^K \delta_{m,h}^k \leq \sum_{\mathcal C \in \mathbf C_\gamma} \sum_{m \in \mathcal C} \sum_{k=1}^K \delta_{m,h}^k \\
        & \leq 
        \sum_{\mathcal C \in \mathbf C_\gamma} \sum_{m \in \mathcal C} \mathcal O \Big (HSA + \sum_{k=1}^K \Big\{ e_{m,n_{m,h}^k} +\xi_{m,h+1}^k \Big\}\Big ) \\
        & \leq 
        \mathcal O(1) \sum_{\mathcal C \in \mathbf C_\gamma} \sum_{m \in \mathcal C} \sum_{k=1}^K e_{m,n_{m,h}^k(x_{m,h}^k, a_{m,h}^k)} 
\end{align*}
where the second inequality 
is from Lemma \ref{lemma:recursion} and the third 
from Lemma \ref{lem:xi_accum}. The total group regret is bounded as
\begin{align}
\label{eqn:final_regret}
    \begin{split}
        &\sum_{m=1}^M \text{Regret}_m(K) \leq \sum_{\mathcal C \in \mathbf C_\gamma} \sum_{m \in \mathcal C} \sum_{k=1}^K \delta_{m,1}^k \\
        &\leq 
        \mathcal O(1)  \sum_{\mathcal C \in \mathbf C_\gamma} \Bigg\{ \sqrt{\frac{KH^3 \iota}{d_{\mathcal C}^* - |\mathcal C|}}  +\sum_{k=1}^K |\mathcal C| \sup_{\substack{(m,x,a): \\ \mathcal U_{m,h}^k \neq \emptyset}}\sqrt{\frac{|\mathcal C|H^3\iota}{N_{\mathcal C,h}^{k}(x,a)}} \Bigg\}
        \\   
        & \leq 
        \mathcal O(1) \sqrt{\frac{\bar \chi (G_\gamma) KH^3 \iota}{d_{\mathbf G_\gamma}^\text{avg}}} + \mathcal O(1) \sum_{\mathcal C \in \mathbf C_\gamma}  \sqrt{|\mathcal C|H^3SAK\iota } \\
        &= \mathcal O(1) \sqrt{\frac{\bar \chi (G_\gamma) KH^3 \iota}{d_{\mathbf G_\gamma}^\text{avg}}} + \mathcal O(1)   \sqrt{\bar \chi (G_\gamma)MH^2SAT\iota }\\
        &= \Tilde{\mathcal O} \Bigg (\sqrt{\frac{\bar \chi (G_\gamma) H^4T }{d_{\mathbf G_\gamma}^\text{avg}}} +  \sqrt{\bar \chi (G_\gamma)MH^4SAT} + \sqrt{H^3MT} \Bigg )
    \end{split}
\end{align}
with probability $1-p$, where the second inequality 
comes from the fact that for any  $(x,a,k,h)$, the number of external messages $ N_{\mathcal C,h}^{G\backslash \mathcal C, k}(x, a) \leq k|\mathcal  C|(d_{\mathcal C}^* - |\mathcal C|)$, and $d_{\mathcal C}^*$ is the max degree in the clique. Intuitively, each agent in the clique receives no more than $d_{\mathcal C}^* - |\mathcal C|$ out-of-clique samples corresponding to step $h$ per episode. The third inequality 
is due to  Cauchy-Schwartz, where $d_{\mathbf G_\gamma}^\text{avg} := (\sum_{C \in \mathbf C_\gamma} (d_{\mathcal C}^* - |\mathcal C|)^{-1} )^{-1}$ is a constant that represents the average cross-clique degree structure, and $\bar \chi (G_\gamma)$ denotes the clique covering number (i.e., minimum number of cliques in a clique covering) of the communication power graph $G_\gamma$. In the worst case, when exploration is uniformly random, the number of clique samples $N_{\mathcal C, h}^{k}(x,a) = |\mathcal C|^2k/SA$. Note that if the initial joint state is kept fixed, \eqref{eqn:final_regret} implies a per-agent sample complexity of $\mathcal O(\varepsilon^{-2}M^{-1/2})$ (see \cite{jin_is_2018} for further details). 
\end{proof}

\begin{remarks} (On trade-off between network size and complexity).  \eqref{eqn:final_regret} bounds the regret as a trade-off between the number of agents in the network $G$ and the clique covering number communication power graph $G_\gamma$. Since $\sqrt{H^3MT} \leq \sqrt{MH^4SAT}$, we reasonably can neglect the third term of \eqref{eqn:final_regret} and focus on the first two terms. When $d_{\mathbf G_\gamma}^\text{avg} SA M \geq 1$, the second term dominates. For example, a line graph, which has a higher clique-to-agent ratio, has $d_{\mathbf G_\gamma}^\text{avg} \geq \mathcal O(1/M)$, which permits the second term to be higher-order regardless of $S$ and $A$. Also, we note that in many practical applications, $SA >> M$. Finally, note that even for a disconnected graph, \eqref{eqn:final_regret} still holds because $\bar \chi (G_\gamma) \leq M $ (the inequality is tight for \textit{completely} disconnected graph), and the first term will drop out (see Lemma  \ref{lem:clique_bound}). Furthermore, for a completely disconnected graph, we recover the naïve parallel regret bound of $\Tilde{\mathcal O}(M\sqrt{H^4SAT})$ presented earlier. 
\end{remarks}

\begin{remarks} (Comparison with centralized regret).
Our regret bound matches the group regret bound $\mathcal O(\sqrt{MT})$ for a fully centralized agent running $MK$ episodes \cite{dubey_provably_2021}.
\end{remarks}

\section{Experimental Results} \label{simulations}

In this section we provide numerical simulations to illustrate results and validate theoretical bounds. We consider a cooperative game where the goal of each agent is navigating to a specific landmark. This is a modified version of Cooperative Navigation task \cite{lowe_multi-agent_2020-1,terry_pettingzoo_2021}.


Here we consider $M$ agents and $M$ landmarks. Agents are modeled with double-integrator dynamics; the (continuous) state space is $4$-dimensional and consists of a planar position and a planar velocity. In our modified environment, agents are assigned a landmark and rewarded based on their distance to the landmark. We discretize the continuous state space into a $10 \times 10 \times 10 \times 10$ grid on the box $\mathcal S = [-2,2]^4$, giving $10^4$ total states. The agents' are initialized to lie within $[-1,1]^4$; states outside of $\mathcal S$ are mapped to the nearest state in $S$. The action space is discrete and consists of movements {\tt{left, right, down, up}} and {\tt{no-op}}. The reward assigned to each agent is given as the Euclidean distance to its assigned landmark. Since each agent is rewarded locally, i.e., independent of interaction with other agents,  
the optimal joint policy is attained through convergence to the optimal local policy for each agent. 

We highlight how communication accelerates convergence to the optimal local policy, including when agents must operate in under-explored regions of the state space. Training is performed according to Algorithm \ref{alg-ucb-hoeffding-with-comm}, and test performance is given as the average reward accrued over an episode of length $H=10$ under the implied policy $\pi_{m,h}^k(s) := \argmax_{a \in \mathcal A} Q_{m,h}^k(s, a)$ for any agent $m$, state $s$, step $h$, and episode $k$. Plots in Figure~\ref{fig:message_comm_perf_plots} show average performance over $10$ trials.

\subsection{Message Passing Communication with 4 Agents}


In this scenario, $M=4$ agents (A, B, C, and D) must navigate to assigned landmarks that are roughly at the same location (also at the origin). During training, agents are assigned initial positions: A with $[-1, 1]/\sqrt{2}$, B with $[-1, -1]/\sqrt{2}$, C with $[1, -1]/\sqrt{2}$, and D with $[1, 1]/\sqrt{2}$. The communication network is an undirected line graph with edges between A and B, B and C, and C and D. We take the message life parameter to be the diameter of the graph, i.e. $\gamma=3$.  At test time the positions of A and D are switched. Since A and D do not share an edge, they must communicate state, action and reward samples through \textit{message passing}. Since each episode is $H=10 > \gamma$, samples are fully propagated through the network one episode (i.e. $H=10$ steps) later. If $\gamma=3$, A and D show improvement when their initial condition is swapped. Fig.~\ref{fig:message_comm_perf_plots} plots the average reward per episode versus the training episode. 
\begin{figure}[ht]
    \centering
    \includegraphics[width=0.3\textwidth]{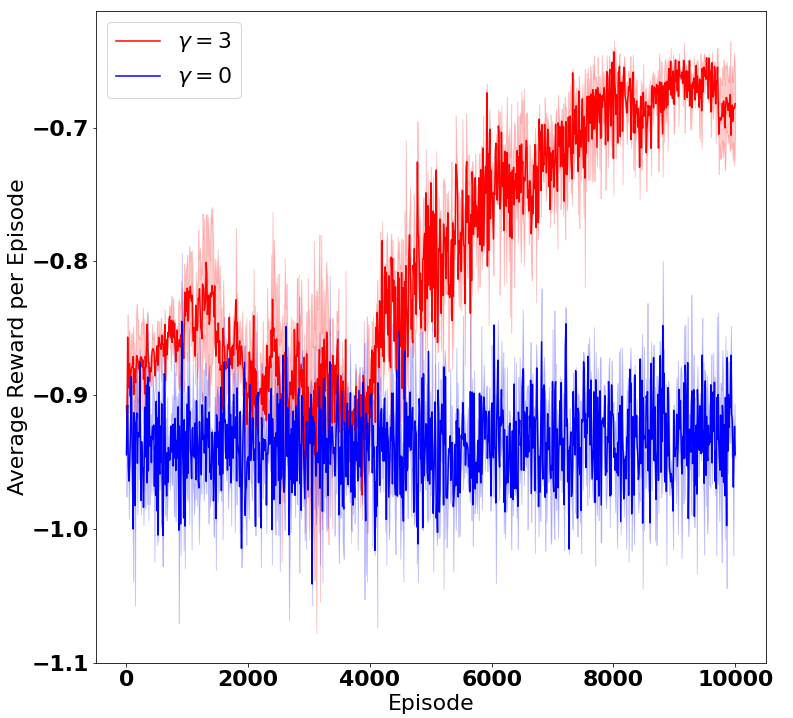}
    \caption{Average episode reward achieved per episode for the 4-agent scenario for different $\gamma$. The reward is averaged over the A and D agents. When the initial states of A and D, which do not share an edge, are swapped, both are able to learn off-policy from message passing data.}
    \label{fig:message_comm_perf_plots}
\end{figure}

\section{Conclusion} \label{conclusion}

Optimal exploration in online reinforcement learning is a key consideration that impacts  sample complexity. We investigate the benefits of fully decentralized exploration in MARL using regret as a metric. We provide a multi-agent extension of the UCB-Hoeffding algorithm provided in \cite{jin_is_2018} where the agents are equipped with a message passing scheme. We prove a regret bound that is $\Tilde{\mathcal O} (\sqrt{MH^4SAT})$, an $\Tilde{\mathcal O} (M^{-1/2})$ improvement over the single-agent setting  Specifically, we consider general network $G$ and show that the regret also depends on the clique structure of the power graph $G_\gamma$, in addition to the number of agents. This key result suggests that the dense network structure, higher message life $\gamma$, and higher number of agents $M$ all reduce the average regret incurred by each agent, as has been shown in simulation. While the assumption of time-varying dynamics demands samples that are polynomial in the episode length $H$, cooperative estimation of the optimal value functions allow parallel experience generation and off-policy learning using communication. When the initial state is fixed, our regret bound corresponds to $\mathcal O(\varepsilon^{-2}M^{-1/2})$ sample complexity. Further work may involve providing a multi-agent lower bound and minimax-optimal regret upper bound using a Bernstein-style or similar UCB bonus as shown in \cite{jin_is_2018, zhang_almost_2020-1}. Further, the message life $\gamma$ should be optimized with respect to the number of agents and network structure by considering communication cost and privacy. We plan to extend our tabular results to deep $Q$-learning.







\bibliographystyle{IEEEtran}
\bibliography{root}

\begin{thebibliography}{10}
\providecommand{\url}[1]{#1}
\csname url@samestyle\endcsname
\providecommand{\newblock}{\relax}
\providecommand{\bibinfo}[2]{#2}
\providecommand{\BIBentrySTDinterwordspacing}{\spaceskip=0pt\relax}
\providecommand{\BIBentryALTinterwordstretchfactor}{4}
\providecommand{\BIBentryALTinterwordspacing}{\spaceskip=\fontdimen2\font plus
\BIBentryALTinterwordstretchfactor\fontdimen3\font minus
  \fontdimen4\font\relax}
\providecommand{\BIBforeignlanguage}[2]{{%
\expandafter\ifx\csname l@#1\endcsname\relax
\typeout{** WARNING: IEEEtran.bst: No hyphenation pattern has been}%
\typeout{** loaded for the language `#1'. Using the pattern for}%
\typeout{** the default language instead.}%
\else
\language=\csname l@#1\endcsname
\fi
#2}}
\providecommand{\BIBdecl}{\relax}
\BIBdecl

\bibitem{rashid_weighted_2020}
T.~Rashid, G.~Farquhar, B.~Peng, and S.~Whiteson,
  ``\BIBforeignlanguage{en}{Weighted {QMIX}: Expanding monotonic value function
  factorisation for deep multi-agent reinforcement learning},''
  \emph{\BIBforeignlanguage{en}{Advances in Neural Information Processing
  Systems 33}}, pp. 1--12, 2020.

\bibitem{foerster_counterfactual_2018}
J.~N. Foerster, G.~Farquhar, T.~Afouras, N.~Nardelli, and S.~Whiteson,
  ``\BIBforeignlanguage{en}{Counterfactual multi-agent policy gradients},''
  \emph{\BIBforeignlanguage{en}{The Thirty-Second AAAI Conference on Artificial
  Intelligence}}, pp. 2974--2982, 2018.

\bibitem{wang_roma_2020-1}
T.~Wang, H.~Dong, V.~Lesser, and C.~Zhang, ``\BIBforeignlanguage{en}{{ROMA}:
  Multi-agent reinforcement learning with emergent roles},''
  \emph{\BIBforeignlanguage{en}{Proceedings of Machine Learning Research}},
  vol. 119, pp. 1--11, 2020.

\bibitem{dubey_provably_2021}
A.~Dubey and A.~Pentland, ``\BIBforeignlanguage{en}{Provably efficient
  cooperative multi-agent reinforcement learning with function
  approximation},'' \emph{\BIBforeignlanguage{en}{arXiv preprint}}, Mar. 2021,
  arXiv: 2103.04972.

\bibitem{zhang_finite-sample_2021-1}
K.~Zhang, Z.~Yang, H.~Liu, T.~Zhang, and T.~Basar, ``Finite-sample analysis for
  decentralized batch multi-agent reinforcement learning with networked
  agents,'' \emph{IEEE Transactions on Automatic Control}, pp. 1--16, 2021.

\bibitem{zhang_marl_2021}
J.~Zhang, A.~S. Bedi, M.~Wang, and A.~Koppel, ``\BIBforeignlanguage{en}{{MARL}
  with general utilities via decentralized shadow reward actor-critic},''
  \emph{\BIBforeignlanguage{en}{arXiv preprint}}, Jun. 2021, arXiv: 2106.00543.

\bibitem{jin_is_2018}
C.~Jin, Z.~Allen-Zhu, S.~Bubeck, and M.~I. Jordan, ``\BIBforeignlanguage{en}{Is
  {Q}-learning provably efficient?}'' \emph{\BIBforeignlanguage{en}{Advances in
  Neural Information Processing Systems 32}}, pp. 1--11, 2018.

\bibitem{zhang_almost_2020-1}
Z.~Zhang, Y.~Zhou, and X.~Ji, ``\BIBforeignlanguage{en}{Almost optimal
  model-free reinforcement learning via reference-advantage decomposition},''
  \emph{\BIBforeignlanguage{en}{Advances in Neural Information Processing
  Systems 34}}, pp. 1--10, 2020.

\bibitem{jin_bellman_2021}
C.~Jin, Q.~Liu, and S.~Miryoosefi, ``\BIBforeignlanguage{en}{Bellman {Eluder}
  dimension: New rich classes of {RL} problems, and sample-efficient
  algorithms},'' \emph{\BIBforeignlanguage{en}{Advances in Neural Information
  Processing Systems 35}}, pp. 1--13, Jul. 2021.

\bibitem{kar_qd-learning_2013}
S.~Kar, J.~M.~F. Moura, and H.~V. Poor,
  ``\BIBforeignlanguage{en}{{QD}-learning: A collaborative distributed strategy
  for multi-agent reinforcement learning through consensus + innovations},''
  \emph{\BIBforeignlanguage{en}{IEEE Transactions on Signal Processing}},
  vol.~61, no.~7, pp. 1848--1862, Apr. 2013.

\bibitem{zhang_fully_2018-1}
K.~Zhang, Z.~Yang, H.~Liu, T.~Zhang, and T.~Basar,
  ``\BIBforeignlanguage{en}{Fully decentralized multi-agent reinforcement
  learning with networked agents},'' \emph{\BIBforeignlanguage{en}{Proceedings
  of the 35th International Conference on Machine Learning}}, vol.~80, no.~1,
  pp. 1--10, 2018.

\bibitem{zhang_networked_2018}
K.~Zhang, Z.~Yang, and T.~Basar, ``Networked multi-agent reinforcement learning
  in continuous spaces,'' in \emph{{IEEE} Conference on Decision and Control},
  Dec. 2018, pp. 2771--2776.

\bibitem{bernstein_complexity_2002}
D.~S. Bernstein, R.~Givan, N.~Immerman, and S.~Zilberstein,
  ``\BIBforeignlanguage{en}{The complexity of decentralized control of {Markov}
  decision processes},'' \emph{\BIBforeignlanguage{en}{Mathematics of
  Operations Research}}, vol.~27, no.~4, pp. 819--840, Nov. 2002.

\bibitem{zhou_nearly_2021}
D.~Zhou, Q.~Gu, and C.~Szepesvari, ``\BIBforeignlanguage{en}{Nearly minimax
  optimal reinforcement learning for linear mixture {Markov} decision
  processes},'' \emph{\BIBforeignlanguage{en}{Proceedings of Machine Learning
  Research}}, vol. 134, pp. 1--45, jan 2021.

\bibitem{lowe_multi-agent_2020-1}
R.~Lowe, Y.~Wu, A.~Tamar, J.~Harb, P.~Abbeel, and I.~Mordatch,
  ``\BIBforeignlanguage{en}{Multi-agent actor-critic for mixed
  cooperative-competitive environments},''
  \emph{\BIBforeignlanguage{en}{Advances in Neural Information Processing
  Systems 31}}, pp. 1--12, 2017.

\bibitem{terry_pettingzoo_2021}
J.~K. Terry, B.~Black, and {et al.}, ``{PettingZoo}: A standard {API} for
  multi-agent reinforcement learning,'' \emph{Advances in Neural Information
  Processing Systems 35}, pp. 1--12, Oct. 2021.

\end{thebibliography}

\end{document}